\let\classAND\AND
\let\AND\relax
\let\AND\classAND
\newcommand\hl[1]{#1}	% highlight command
\newcommand\tmlrhl[1]{#1}	% highlight command
\pgfplotsset{compat=1.7}
\newcommand{\R}{{\mathbb R}}
\newcommand{\I}{I}
\newcommand{\normalchar}{{N}}
\newcommand{\normal}[1][]{{\normalchar(0, \sigma^2 \I_{#1})}}
\DeclareMathOperator*{\E}{\mathbb{E}}
\newcommand{\T}{^\intercal}
\newcommand{\Range}{\mathcal{R}}
\newcommand{\Null}{\mathcal{N}}
\newcommand{\defeq}{\vcentcolon=}
\newcommand{\vol}{{\protect\ooalign{\hfil$V$\hfil\cr\kern0.08em--\hfil\cr}}}
\DeclareMathOperator{\sign}{sign}
\DeclareMathOperator*{\argmax}{arg\,max}
\newcommand{\Cd}{C^d}
\newcommand{\Rd}{\R^d}
\newcommand{\Rp}{\R^p}
\newcommand{\Ce}{C^d_{\epsilon}}
\newcommand{\e}{\epsilon}
\newcommand{\classes}{[0,1]^c}
\newcommand{\labels}{\mathcal{Y}}
\newcommand{\xtil}{\tilde{x}}
\newcommand{\deltil}{\tilde{\delta}}
\newcommand{\p}{P}
\newcommand{\pout}{\tilde{P}}
\newcommand{\baseall}{f}
\newcommand{\net}{f_{\theta}}
\newcommand{\netout}{\tilde{f}_{\theta}}
\newcommand{\smoothout}{\tilde{f}^s_{\theta}}
\newcommand{\smoothsoft}{f^s}
\newcommand{\smoothhard}{g}
\newcommandtwoopt{\cert}[2][][R]{\Delta_{#1}^U(#2)}
\newcommandtwoopt{\certt}[2][][R]{\tilde{\Delta}_{#1}^U(#2)}
\newcommand{\SUx}{S^{\Null(U\T)}_{d-p}(x)}
\renewcommand{\xi}{x^{(i)}} 
\newcommand{\xip}{x^{(i+1)}} 
\newcommand{\xo}{x^{(1)}}
\newcommand{\di}{\delta_V^{(i)}}
\newcommand{\dip}{\delta_V^{(i+1)}}
\newcommand{\pgd}{\textsc{PGD}}
\newcommand{\spgd}{\textsc{SubspacePGD}}
\def\thm@space@setup{\thm@preskip=8pt
\thm@postskip=0pt}
\theoremstyle{definition}
\newtheorem{definition}{Definition}
\title{Projected Randomized Smoothing for Certified Adversarial Robustness}
\author{\name Samuel Pfrommer \email sam.pfrommer@berkeley.edu \\
      \addr Department of Electrical Engineering and Computer Sciences \\
      University of California, Berkeley
      \AND
      \name Brendon G.\ Anderson \email bganderson@berkeley.edu \\
      \addr Department of Mechanical Engineering \\
      University of California, Berkeley
      \AND
      \name Somayeh Sojoudi \email sojoudi@berkeley.edu \\
      \addr Department of Electrical Engineering and Computer Sciences \\
      Department of Mechanical Engineering \\
      University of California, Berkeley}
\begin{document}

\maketitle

\begin{abstract}
Randomized smoothing is the current state-of-the-art method for producing provably robust classifiers. While randomized smoothing typically yields robust $\ell_2$-ball certificates, recent research has generalized provable robustness to different norm balls as well as anisotropic regions. This work considers a classifier architecture that first projects onto a low-dimensional approximation of the data manifold and then applies a standard classifier. By performing randomized smoothing in the low-dimensional projected space, we characterize the certified region of our smoothed composite classifier back in the high-dimensional input space and prove a tractable lower bound on its volume. We show experimentally on CIFAR-10 and SVHN that classifiers without the initial projection are vulnerable to perturbations that are normal to the data manifold and yet are captured by the certified regions of our method. We compare the volume of our certified regions against various baselines and show that our method improves on the state-of-the-art by many orders of magnitude.
\footnote{Source code for reproducing our results is available on \href{https://github.com/spfrommer/projected_randomized_smoothing}{\textcolor{blue}{GitHub}}.}.
\end{abstract}

\section{Introduction}
\label{sec:intro}

Despite their state-of-the-art performance on a variety of machine learning tasks, neural networks are vulnerable to adversarial inputs---inputs with small (often human-imperceptible) noise that is maliciously crafted to induce failure \citep{biggio2013evasion,szegedy2014intriguing,nguyen2015deep}. This sensitive behavior is unacceptable in contemporary safety-critical applications of neural networks, such as autonomous driving \citep{bojarski2016end,wu2017squeezedet} and the operations of power systems \citep{kong2017short}. The works \citet{eykholt2018robust} and \citet{liu2019perceptual} highlight the validity and eminence of these threats, wherein both physical and digital adversarial perturbations are shown to cause image classification models to misclassify vehicle traffic signs.

Heuristics have been proposed to defend against various adversarial attacks, only to be defeated by stronger attack methods, leading to an ``arms race'' in the literature \citep{carlini2017adversarial,kurakin2017adversarial,athalye2018obfuscated,uesato2018adversarial,madry2018towards}. This has motivated researchers to consider certifiable robustness---theoretical proof that models perform reliably when subject to arbitrary attacks of a bounded norm \citep{wong2018provable,weng2018towards,raghunathan2018semidefinite,anderson2020tightened,ma2021sequential}. Randomized smoothing, popularized in \citet{lecuyer2019certified,li2019certified,cohen2019certified}, remains one of the state-of-the-art methods for generating classifiers with certified robustness guarantees. Instead of directly classifying a given input, randomized smoothing intentionally corrupts the input with random noise and returns the most probable class, which, intuitively, ``averages out'' any potential adversarial perturbations in the data.

The seminal work \citet{cohen2019certified} certifies that no adversarial perturbation within a certain $\ell_2$-ball can cause the misclassification of a smoothed model using isotropic Gaussian noise of a fixed variance. Recent works have attempted to certify larger regions of the input space by turning to randomized smoothing with optimized variances \citep{zhai2020macer}, input-dependent variances \citep{alfarra2020data,wang2021pretrain}, anisotropic distributions \citep{eiras2021ancer}, and semi-infinite linear programming \citep{anderson2022towards}. However, for a fixed variance, the certified radius is upper-bounded by a constant in the dimension $d$ of the input \citep{kumar2020curse}, implying that the volume of the certified $\ell_2$-ball degrades factorially fast as $O(K^d \Gamma(\frac{d}{2}+1)^{-1})$, where $\Gamma$ is Euler's gamma function and $K$ is some positive constant \citep{folland1999real}. Current input-dependent and anisotropic smoothing approaches have similarly been shown to suffer from the curse of dimensionality \citep{sukenik2021intriguing}.

The small certified regions of randomized smoothing in high dimensions corroborate empirical findings that show increased robustness when precomposing classifiers with dimensionality reduction, e.g., principal component analysis projections \citep{bhagoji2018enhancing} and autoencoders \citep{sahay2019combatting}. These findings align with the manifold hypothesis, which posits that real datasets lie on a low-dimensional manifold in a high-dimensional feature space \citep{fefferman2016testing}, and related results showing that perturbation directions most useful to an adversary are ones normal to this manifold \citep{jha2018detecting,zhang2020principal}. Thus, projecting inputs onto the manifold, or at least a low-dimensional subspace containing the manifold, should increase classification robustness. Methods taking this approach, such as \citet{mustafa2019image} and \citet{alemany2022dilemma}, have worked well as heuristics, but lack theoretical robustness guarantees. Motivated by these works, we aim to enlarge the certifiably robust regions of randomized smoothing by performing the smoothing in a low-dimensional space in which adversarial access to the data's statistically insignificant yet vulnerable features has been eliminated.

\subsection{Contributions}

We propose \emph{projected randomized smoothing}, whereby inputs are projected onto a low-dimensional linear subspace in which randomized smoothing is applied before classification. Our method combines the empirical successes of dimension-reducing projection methods with the theoretical guarantees of randomized smoothing to achieve the following contributions:
\begin{enumerate}
	\item We theoretically characterize the geometry of the certified region in the input space and prove a tractable lower bound on the volume of this certified region.
	\item We empirically demonstrate that classifiers can be attacked along subspaces spanned by statistically insignificant features that contribute nothing to classification accuracy, which are vulnerabilities that projected randomized smoothing certifiably eliminates.
    \item Experiments on CIFAR-10 \citep{krizhevsky2009learning} and SVHN \citep{netzer2011reading} show that our method yields certified regions with order-of-magnitude larger volumes than prior smoothing schemes.
\end{enumerate}

\subsection{Related works}

\paragraph{Robustification via dimensionality reduction.}
The work \citet{bhagoji2018enhancing} was the first to consider linearly projecting inputs onto the top principal components of the training data before classification as a means to improve empirical (not certified) robustness. The authors of \citet{sahay2019combatting} nonlinearly preprocess test data using denoising and dimension-reducing autoencoders, and find a substantial increase in classification accuracy when the inputs are subject to the popular fast gradient sign method attack.
 \hl{The work \citet{bafna2018thwarting} projects an input onto its top-$k$ discrete cosine transform components to defend against ``$\ell_0$''-attacks, but this empirical defense was later broken using adapative ``$\ell_0$''-attacks \citep{tramer2020adaptive}, which directly motivates our approach for certified projection-based robustness. The work \citet{sanyal2018robustness} introduces a low-rank regularizer to encourage neural network feature representations to reside in a low-dimensional linear subspace, which is found to enhance empirical robustness.}
In \citet{mustafa2019image}, the authors use super-resolution to project images onto the natural data manifold and obtain high empirical robustness for convolutional neural networks. \citet{alemany2022dilemma} shows that decreasing the codimension of data, i.e., decreasing the difference between the intrinsic dimension of the data manifold and the dimension of the input space in which it is embedded, generally leads to increased robustness of models defined on that input space.

\hl{\citet{shamir2021dimpled} posits that learned decision boundaries tend to align with and ``dimple'' around the natural data manifold, and that adversarial perturbations are normal to this manifold. This finding supports our approach for certifiably eliminating off-manifold perturbations by projecting onto a low-dimensional approximation of the data manifold. The authors of \citet{awasthi2021adversarially} reformulate principal component analysis to find projections that are robust with respect to projection error---a method that naturally complements our framework---and give robustness guarantees for the Bayes optimal projection-based classifier in the special case of binary Gaussian-distributed data.} The work \citet{zeng2021certified} precomposes classifiers with orthogonal encoders and performs randomized smoothing in the encoder's low-dimensional latent space as a means to speed up the sample-based smoothing procedure. To the best of our knowledge, \citet{zeng2021certified} is the only work that provides certified robustness guarantees for general models and data distributions when using dimensionality reduction at the input---all of the other referenced works are heuristic---and their choice of orthogonal encoders ensures that the certified $\ell_2$-ball in the input space has the same radius as that in the latent space. \hl{Notably, their approach is highly conservative in estimating the input-space certified set as it relies on Lipschitzness of the orthogonal encoding layers, and is thus employed primarily as a means to speed up randomized smoothing. On the other hand, the method we propose uses a robustification-motivated projection for which we prove more general (anisotropic) certicates that capture off-manifold perturbations.}

% We may want to also mention that orthogonal encoders are nontrivial to train and may be model-dependent (i.e., dependent on the classifier), whereas the use of PCA projection is entirely model agnostic and dependent only on the dataset.

\paragraph{Certification via randomized smoothing.}

The work \citet{cohen2019certified} develops randomized smoothing using an isotropic Gaussian distribution with input-independent variance to obtain certified $\ell_2$-balls. A subsequent line of works attempts to generalize randomized smoothing to other classes of certified regions, e.g., Wasserstein, ``$\ell_0$''-, $\ell_1$-, and $\ell_\infty$-balls \citep{levine2020wasserstein,lee2019tight,teng2019ell_1,yang2020randomized}. Various approaches have been taken to enlarge the certified regions. For example, \citet{salman2019provably} unifies adversarial training with randomized smoothing to obtain state-of-the-art certified $\ell_2$-radii. The authors of \citet{zhai2020macer} incorporate the certified $\ell_2$-radius into the model's training objective as a means to enlarge certified regions. The method in \citet{zhang2020black} optimizes over base classifiers to increase the size of more general $\ell_p$-balls. \tmlrhl{\citet{li2022double} employs a second smoothing distribution to tighten robustness certificates.}

Optimizing the certified region pointwise in the input space has also been considered, but generally these methods require locally constant smoothing distributions to ensure that the resulting certificates are mathematically valid \citep{alfarra2020data,wang2021pretrain,sukenik2021intriguing,anderson2022certified}. To further strengthen the robustness guarantees of randomized smoothing, the recent works \citet{eiras2021ancer,erdemir2021adversarial,tecot2021robustness} have turned to certifying anisotropic regions of the input space. For example, \citet{eiras2021ancer} maximizes the volume of certified ellipsoids and generalized cross-polytopes of the form $\{x\in \Rd : \|Ax\|_p \le b\}$ for $p\in\{1,2\}$, allowing for the certification of perturbations that are potentially larger in magnitude than the minimum adversarial perturbation. We show in Section \ref{sec:experiments} that our proposed method is able to outperform these methods by leveraging dimensionality reduction. \hl{As is standard practice in the randomized smoothing literature \citep{cohen2019certified,yang2020randomized,jeong2021smoothmix,zhai2020macer,lee2019tight}, our emphasis is on certified robustness and not empirical robustness---we refer the reader to \citet{maho2022randomized} for connections between certified and empirical robustness under randomized smoothing, and in particular the difficulty in constructing and evaluating suitable empirical attacks.}

We also emphasize that volume (Lebesgue measure) is the natural scalar measure for the size of anisotropic certified regions of the input space and is the standard notion considered by prior works \citep{liu2019certifying,eiras2021ancer,tecot2021robustness}.

\subsection{Notation}
We denote the set of real numbers by $\R$. The $\ell_2$-norm of a vector $x\in \R^n$ is denoted by $\|x\|$, whereas the general $\ell_p$-norm is given an explicit subscript $\|x\|_p$. The range and nullspace of a matrix $U\in \R^{m\times n}$ are denoted by $\Range(U) \subseteq \R^m$ and $\Null(U) \subseteq \R^n$, respectively. The $n\times n$ identity matrix is written as $I_n$. For a random variable $X$ with distribution $\mathcal{D}$ and a measurable function $f$, the expectation of $f(X)$ is denoted by $\E_{X\sim \mathcal{D}}f(X)$. The multivariate normal distribution with mean $\mu \in \R^n$ and covariance $\Sigma \in \R^{n \times n}$ is given by $\normalchar(\mu, \Sigma)$. The cardinality of a set $S$ is written as $|S|$. For a Lebesgue-measurable set $S\subseteq \R^n$ contained in a $k$-dimensional affine subspace, we write $\vol_k(S)$, termed the $k$-dimensional volume of $S$, to mean the Lebesgue measure of $S$ within that affine subspace. For sets $S,T\subseteq\R^n$, we denote their Minkowski sum by $S+T=\{x+y : x\in S, ~ y\in T\}$. Euler's gamma function is denoted by $\Gamma$. Recall that $\Gamma(n) = (n-1)!$ when $n$ is a positive integer.

\section{Classifier architecture}

Consider the task of classifying inputs from a zero-centered cube $\Cd=[-1/2,1/2]^d \subseteq \Rd$ into $c$ distinct classes $\labels = \{1,2,\dots,c\}$.\footnote{The zero-centered cube is used without loss of generality instead of $[0, 1]^d$ for notational convenience and compatibility with results from the mathematical literature.} Under the randomized smoothing framework, we begin with a given classifier $\net\colon \Rd\to\classes$, parameterized by $\theta$, that maps into the probability simplex over $c$ classes. The problem at hand is to increase the robustness of $\net$ with certifiable guarantees.

\paragraph{Vanilla randomized smoothing.}
We give a brief overview of how this would be accomplished using vanilla randomized smoothing \citep{cohen2019certified}. Randomized smoothing takes the \textit{base classifier} $\net$ and smooths it with Gaussian noise on the input to yield the associated smoothed soft and hard classifiers
\[
    \smoothsoft(x) = \E_{\e \sim \normal[d]} \net(x + \e), \quad \smoothhard(x) = \argmax_{y\in\labels} \smoothsoft(x)_y,
\]
where $f^s(x)_y$ denotes the $y$th component of the vector $f^s(x)$ and $\sigma$ is a hyperparameter. \citet[Theorem~1]{cohen2019certified} then gives, under certain conditions, a certified $\ell_2$-ball for a particular input $x \in \Rd$; namely, that $g(x + \delta) = g(x)$ for all $\|\delta\| < R$, where $R > 0$ is determined by the confidence of the smoothed classifier at $x$. We leverage this result for our approach and refer interested readers to \citet{cohen2019certified} for additional details on the computation of the smoothing expectation and precise formula for $R$.

\paragraph{Projected randomized smoothing.}
Motivated by the relationships between robustness and dimensionality described in Section \ref{sec:intro}, we consider $p<d$ and let $\p\colon\Rd\to\Rp$ be a projection into $\Rp$ defined by $\p(x) = U\T x$, where $U\in \R^{d\times p}$ is a semi-orthogonal matrix satisfying $U\T U = \I_p$. Similarly, we let the reconstruction $\pout\colon \Rp\to\Rd$ be defined by $\pout(\xtil) = U \xtil$. Throughout, we let $v_1,\dots,v_{d-p} \in \Rd$ be an orthonormal basis for $\Null(U\T)$ and let $v_{d-p+1},\dots,v_d\in\Rd$ denote the orthonormal columns of $U$. In practice, we instantiate the columns of $U$ as the first $p$ principal components of a random subset of the training dataset, \tmlrhl{although our method and theory hold for any orthonormal set of vectors}. With the dimension-reducing projection $P$ in place, we consider the classifier architecture consisting of the composition 
\[
    f = \net\circ\pout\circ\p.
\]
In particular, $f$ first uses $\p$ to project inputs into the low-dimensional space $\Rp$ and then reconstructs the inputs in a lossy way using $\pout$ before feeding them through the classifier $\net$. We generally finetune $\net$ to account for the slight image corruption associated with the projection step.

We now propose \emph{projected randomized smoothing}, wherein randomized smoothing is performed in the compressed space $\Rp$. To do so, we define $\netout\colon \Rp \to \classes$ by $\netout = \net \circ \pout$ so that $\baseall = \netout \circ \p$, and we smooth $\netout$ by adding Gaussian noise in its low-dimensional input space to obtain a new classifier $\smoothout\colon\Rp \to \classes$ defined by
\begin{equation}
    \smoothout(\xtil) = \E_{\e \sim \normal[p]} \netout(\xtil + \epsilon).
    \label{eq:smoothout}
\end{equation}
The new overall smoothed soft classifier is then given by
\begin{equation}
\smoothsoft = \smoothout \circ \p,
\label{eq:smoothsoft}
\end{equation}
and its structure is illustrated in Figure~\ref{fig:architecture}. The corresponding hard classifier is then given by the $\argmax$ of the soft classifier:\footnote{For ease of exposition, we assume throughout that all $\argmax$ yield singleton sets and therefore equality signs may be used unambiguously.}
\begin{equation}
    \smoothhard(x) = \argmax_{y\in\labels} \smoothsoft(x)_y.
    \label{eq:smoothhard}
\end{equation}
A graphical illustration of our approach for $d=2$ is shown in Figure~\ref{fig:projected_rs_illustration}. To summarize, classifying an input $x\in \Rd$ using projected randomized smoothing amounts to applying the mapping $x\mapsto \smoothhard(x)$ defined by \eqref{eq:smoothout} through \eqref{eq:smoothhard}, and it is for $\smoothhard$ that we seek to derive certified regions of the input space.

\begin{figure}
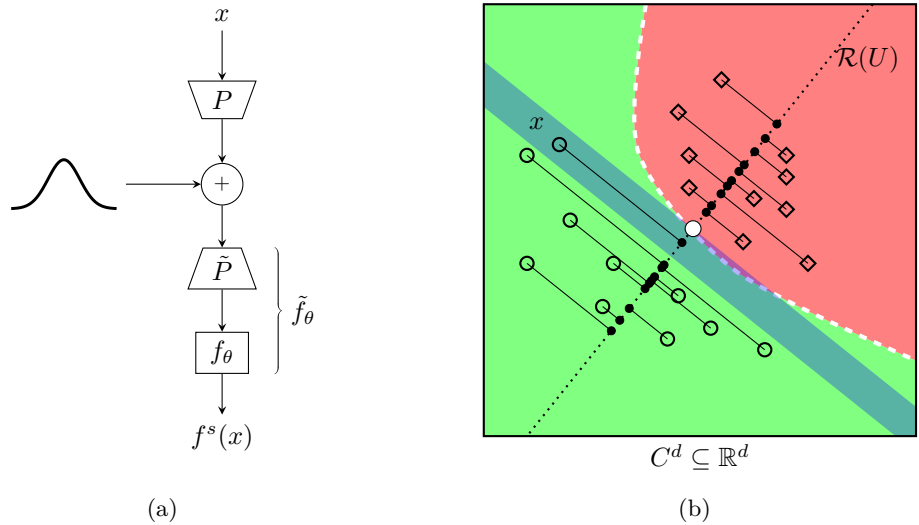

    \centering
    \hfil
    \begin{subfigure}[b]{0.35\linewidth}
	\centering
	\includegraphics[height=1.1\linewidth]{res/method_fig.tikz}
	\caption{~}
        \label{fig:architecture}
    \end{subfigure}
    \hfil
    \begin{subfigure}[b]{0.35\linewidth}
    %\vspace*{0.1cm}
	\centering
	\includegraphics[height=1.1\linewidth]{res/projected_rs_illustration.tikz}
	\caption{~}
        \label{fig:projected_rs_illustration}
    \end{subfigure}
    \hfil
	\caption{
        (\subref{fig:architecture}) Projected randomized smoothing architecture. Inputs $x$ are projected into low-dimensional space by $\p$, smoothed with Gaussian noise, and then reconstructed by $\pout$ and classified by $\net$.
        (\subref{fig:projected_rs_illustration}) Illustration of projected randomized smoothing for a binary classification task (circles vs. squares). The base classifier decision regions are shown in green and red. The white circle represents the smoothed decision boundary in $\Rp$, $p=1$, with the projected subspace depicted by the dotted line and projected points depicted as solid dots. The blue area represents the certified region around $x$ in $\Rd$ of the projected randomized smoothing classifier $\smoothhard$.
	}
\end{figure}

\section{Robustness certificates}
In this section, we construct certified regions for $g$ around arbitrary inputs $x$ in the high-dimensional space $\Rd$. The key idea is that $\smoothout$ is $\ell_2$-ball robust in the low-dimensional space $\Rp$, and the preimage of this ball in the original input space is then ``large'' as it includes the inputs in $\Null(U\T)$. We formalize the geometry of the certified region in Section~\ref{sec:geometry} and introduce our metric of interest as the volume of the certified region restricted to the unit cube of feasible inputs. In Section~\ref{sec:vol}, we provide a lower bound on this volume in high-dimensional spaces that involves solving an $\ell_\infty$-norm linear regression. Section~\ref{sec:asymptotic} compares the asymptotic behavior of the volume of the certified region of $g$ with the standard $\ell_2$-ball certificates as the input dimension grows large. Finally, we discuss runtime and limitations in Section~\ref{sec:runtime}. For ease of exposition, all proofs are deferred to the appendices.

\subsection{Characterizing the certified region geometry}
\label{sec:geometry}

In the following two propositions, we characterize the geometry of the projected randomized smoothing classifier $g$ in the high-dimensional input space $\Rd$ based on the certified $\ell_2$-robustness of the classifier $\smoothout$ in the low-dimensional projected space $\Rp$.

\begin{definition}
	\label{def:certified}
	Let $\xtil\in\Rp$ and $R\ge 0$. The classifier $\smoothout\colon\Rp\to\classes$ is said to be \emph{certified at $\xtil$ with radius $R$} if 
    \[
        \argmax_{y\in\labels}\smoothout(\xtil+\deltil)_y = \argmax_{y\in\labels}\smoothout(\xtil)_y
    \]
    for all $\deltil\in\Rp$ satisfying $\|\deltil\|\le R$.
\end{definition}

\begin{propositionrep}
	\label{prop:certset}
    Let $x \in \Rd$ and $R\ge 0$. If $\smoothout$ is certified at $\p(x) = U\T x$ with radius $R$, then $\smoothhard(x + \delta) = \smoothhard(x)$ for all $\delta \in \cert \subseteq \Rd$, where
    \[
        \cert \defeq \{ \delta \in \Rd : \| U\T \delta \| \le R \}
    \]
\end{propositionrep}
\begin{proof}
    Let $\delta \in \cert$. Then
    \begin{equation*}
        \smoothhard(x + \delta) = \argmax_{y\in\labels} \smoothout(\p(x + \delta))_y = \argmax_{y\in\labels} \smoothout(\p(x) + U\T \delta)_y.
    \end{equation*}
    Since $\|U\T \delta\| \le R$ by definition of $\cert$ and $\smoothout$ is certified at $\p(x)$ with radius $R$, we have that
    \begin{equation*}
        \smoothhard(x + \delta) = \argmax_{y\in\labels} \smoothout(\p(x))_y = g(x).
    \end{equation*}
\end{proof}

\begin{propositionrep}
\label{prop:certsetgeo}
    Let $R\ge 0$. The certified region $\cert$ can be expressed as the Minkowski sum $\cert = B_p^U(R) + \Null(U\T)$, where $B_p^U(R) \subseteq \Rd$ is a $p$-dimensional ball embedded into $\Range(U)$:
    \begin{equation*}
        B_p^U(R) \defeq \left\{ \beta_1 v_{d-p+1} + \cdots + \beta_{p} v_d : \|\beta\| \le R, ~ \beta\in \R^p \right\}.
    \end{equation*}
    \vspace*{-0.7cm}
\end{propositionrep}
\begin{proof}
    Let $y = y_1 + y_2$ with $y_1 \in B_p^U(R)$ and $y_2 \in \Null(U\T)$. Then
    \[
        \| U\T y \| = \| U\T y_1 \| = \| \beta \| \le R,
    \]
    so $y\in\cert$.

    On the other hand, let $y \in \cert$ as defined in Proposition~\ref{prop:certset}. We can decompose $y = y_1 + y_2$ for $y_1 \in \Range(U)$ and $y_2 \in \Null(U\T)$. Then there exists $\beta\in\R^p$ such that $y_1 = U\beta = \sum_{i=d-p+1}^n \beta_{i-d+p} v_i$, so $\| U\T y_1 \| = \| \beta \|$ and therefore $\| \beta \| \le R$.
\end{proof}
Propositions~\ref{prop:certset} and \ref{prop:certsetgeo} characterize the geometry of the certified region of our classifier $g$. Proposition~\ref{prop:certset} provides an easy-to-check condition for an input to lie in the certified region, while Proposition~\ref{prop:certsetgeo} formalizes the same geometry as a hypercylinder consisting of a low-dimensional sphere that is ``extruded'' along the nullspace of the projection $\p$, allowing us to certify adversarial off-manifold inputs of potentially very large magnitude that are projected back onto the natural data manifold. Intuitively, the certified region $\cert$ is potentially much larger than an $\ell_2$-ball of radius $R$ in $\Rd$, as it captures perturbations in the nullspace of $U\T$ whose dimensionality is large when $p \ll d$.

\tmlrhl{We note that the above characterization of the decision region geometry holds analogously for other norm ball certificates in the projected space (i.e., the $\ell_1$-ball certificates of \citet{levine2021improved}). While the following theory is presented for the concrete case of $\ell_2$-ball certificates, it also applies to this more general setting. Concrete experiments with other certificates is an exciting line of future work.}

\subsection{Lower-bounding the certified region volume}
\label{sec:vol}
To compare a standard $\ell_2$-ball certificate with our certified region $\cert$, which does not immediately come equipped with a notion of ``radius,'' we adopt the perspective of recent works, e.g., \citet{liu2019certifying,eiras2021ancer,tecot2021robustness}, by considering our metric of interest to be the volume of the certified region. One immediate issue is that the volume of $\cert$ is infinite since $\Null(U\T)$ is an unbounded subspace. To enable meaningful comparisons, we restrict ourselves to measuring the volume of $\cert$ contained in the cube $\Cd = [-1/2, 1/2]^d$ of possible inputs. This amounts to computing the volume
\begin{equation}
    \vol_d \left( \Cd \cap \cert[x] \right),
    \label{eqn:vol}
\end{equation}
where we recall that $\vol_d$ measures $d$-dimensional volume in Euclidean space, and $\cert[x] \defeq \{ x + \delta : \delta \in \cert \}$, with $R$ chosen such that $\smoothout$ is certified at $P(x)$ with radius $R$ so that $g(x') = g(x)$ for all $x'\in \cert[x]$ by Proposition~\ref{prop:certset}. Computing the volume in \eqref{eqn:vol} is highly nontrivial, especially in high-dimensional input spaces. Instead, we develop a tractable lower bound on $\vol_d(\Cd\cap\cert[x])$ throughout the remainder of this section. Since $\cert[x]$ contains affine subspaces, this derivation rests heavily on theory regarding cube-subspace intersections in high dimensions. The most important result for our purposes comes from \citet{vaaler1979geometric}, which showed the following.

% \begin{theorem}[\citep{vaaler1979geometric}]
\begin{theorem}
	\label{thm:vol}
    Let $S_k$ be a $k$-dimensional linear subspace of $\Rd$. Then $\vol_k (\Cd \cap S_k) \geq 1$.
    \vspace*{-0.1cm}
\end{theorem}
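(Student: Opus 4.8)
The plan is to first strip away all reference to the ambient embedding and recast the statement as a purely analytic inequality about a tight frame. Fix an orthonormal basis for $S_k$ and collect it as the columns of a matrix $B \in \R^{d \times k}$ with $B\T B = \I_k$; the map $y \mapsto By$ is then an isometry of $\R^k$ onto $S_k$, so $\vol_k$ on $S_k$ pulls back to Lebesgue measure on $\R^k$. Writing $b_1\T, \dots, b_d\T$ for the rows of $B$, the point $By$ lies in $\Cd$ exactly when $|\langle b_i, y\rangle| \le 1/2$ for every $i$, whence
\[
  \vol_k(\Cd \cap S_k) = \int_{\R^k} \prod_{i=1}^d \psi(\langle b_i, y\rangle)\, dy, \qquad \psi \defeq \mathbf{1}_{[-1/2,1/2]},
\]
and the frame identity $\sum_{i=1}^d b_i b_i\T = B\T B = \I_k$ is the only structure I would retain. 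The goal becomes to minimize this product integral over all finite families $\{b_i\}$ forming a tight frame for $\R^k$. A coordinate subspace corresponds to $k$ of the $b_i$ being standard basis vectors and the rest zero, giving volume exactly $1$; this both certifies that the bound cannot be improved and identifies the extremal configuration to aim for.

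Next I would pass to the Fourier side, which is where the constant $1$ is actually located. A Fourier-inversion computation---equivalently, viewing the integral as the value at the origin of the density of the projection of the uniform cube onto $S_k^\perp$---rewrites the volume as an integral over $S_k^\perp$ of a product of $\mathrm{sinc}$ factors, one per linear form. The projected density is log-concave and even, hence maximized at the origin; this observation is suggestive but by itself does not pin the constant, since the support of the projection can be large. To extract the sharp value I would replace each indicator $\psi$ by an extremal band-limited minorant of Beurling--Selberg/Vaaler type---a function $m \le \psi$ whose Fourier transform has prescribed compact support and whose integral approaches $1$ as the allowed bandwidth grows. Band-limitedness is exactly what lets Poisson summation convert the continuous integral into a sum over a lattice dual to the frame, at which point the identity $\sum_i b_i b_i\T = \I_k$ forces the main term to equal $1$.

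The main obstacle, and the reason this is a genuine theorem rather than a one-line estimate, is the non-positivity of the $\mathrm{sinc}$ kernel: naive pointwise majorization across the $d$ forms is unavailable because the factors change sign, so the log-concavity heuristic cannot be upgraded directly. The Beurling--Selberg minorant is precisely the device that repairs this, but it introduces two difficulties I would need to control carefully---its integral falls short of $1$ by an amount that must be driven to zero in the bandwidth limit, and the one-dimensional inequality must be tensorized across all $d$ forms in a way that remains compatible with, and is ultimately saturated by, the tight-frame constraint. Verifying that the error terms from Poisson summation vanish in the limit, uniformly in the frame, is the technical heart of the argument.
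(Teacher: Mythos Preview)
The paper does not prove this theorem at all: it is quoted as a known result from \citet{vaaler1979geometric} (resolving Good's conjecture, after the codimension-one case by Hensley) and is used purely as a black box to derive the subsequent corollaries and the main volume bound. So there is no ``paper's proof'' to compare against beyond the citation.

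Your sketch is, in fact, a reasonable high-level outline of Vaaler's original argument: reduce to a product of one-dimensional indicators against a tight frame, pass to the Fourier side, and replace each indicator by a band-limited extremal minorant of Beurling--Selberg type so that the tight-frame identity forces the leading term to be exactly $1$. Two small corrections to the narrative: Vaaler's argument does not really proceed via Poisson summation over a lattice dual to the frame with a bandwidth sent to infinity---the extremal minorant has a fixed, specific Fourier support chosen so that the resulting product, after Fourier inversion, collapses directly under $\sum_i b_i b_i\T = \I_k$; and the ``error terms vanish in the limit'' framing is not how the sharp constant is obtained. If you want to flesh this into a full proof, the cleanest route is to follow Vaaler's construction of the minorant and the subsequent positive-definiteness argument rather than a limiting Poisson-summation scheme. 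For the purposes of the present paper, simply citing Vaaler is what is done and is entirely appropriate.
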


This result proved Good's conjecture and generalized a previous result for the $k=d-1$ case \citep{hensley1979slicing}. We begin with an extension of Theorem~\ref{thm:vol} to cubes of non-unit side length, and then to intersections with affine subspaces which do not necessarily contain the origin.

\begin{corollaryrep}
	\label{cor:vol}
    Let $S_k$ be a $k$-dimensional linear subspace of $\Rd$ and $r \Cd$ be a zero-centered cube of side length $r > 0$. Then $\vol_k (r\Cd \cap S_k) \geq r^k$.
\end{corollaryrep}
\begin{proof}
    Note that 
    \begin{align*}
        r\Cd \cap S_k &= \{ x \in \Rd : \| x \|_{\infty} \leq r/2,~ x \in S_k \} \\
                      &= \{ rx \in \Rd : \| rx \|_{\infty} \leq r/2, ~ rx \in S_k \} \\
                      &= \{ rx \in \Rd : \| x \|_{\infty} \leq 1/2, ~ x \in S_k \},
    \end{align*}
    since $x \in S_k$ if and only if $rx \in S_k$, by linearity of $S_k$. This is now equivalent to the set $r(\Cd \cap S_k)$, and we have scaled our $k$-dimensional subset by a uniform factor $r$. Therefore, $\vol_k(r \Cd \cap S_k) = \vol_k(r(\Cd \cap S_k)) = r^k \vol_k(\Cd \cap S_k)$ by \citet[Theorem~2.44]{folland1999real}. Thus, by Theorem~\ref{thm:vol}, we have ${\vol_k(r\Cd\cap S_k) \ge r^k}$.
\end{proof}

\begin{corollaryrep}
	\label{cor:volarbitrary}
    Let $x\in\Rd$ and let $S_k(x) \subseteq \Rd$ be the $k$-dimensional affine subspace 
    \[
        S_k(x) = \bigg\{ x + \sum_{i=1}^k \alpha_i v_i : \alpha \in\R^k \bigg\}
    \]
    spanned by arbitrary vectors $v_1,\dots,v_k$ and passing through $x$. Let $t\ge 0$ be the minimal $\ell_\infty$-norm of a point in $S_k(x)$:
    \begin{equation}
        t \defeq \inf_{x'\in S_k(x)}\|x'\|_\infty = \inf_{\alpha \in \R^{k}} ~ \left\lVert  x + \sum\nolimits_{i=1}^k \alpha_i v_i \right\rVert_{\infty}.
	    \label{eqn:mindist}
    \end{equation}
    Then, for all $r > 2t$, it holds that $\vol_k (r \Cd \cap S_k(x)) \geq (r - 2t)^k$.
    \vspace*{-0.1cm}
\end{corollaryrep}
\begin{proof}
First, notice that the infimum in \eqref{eqn:mindist} is attained since $\|\cdot\|_\infty$ is continuous and coercive, and $S_k(x)$ is closed in the standard topology on $\Rd$ \citep{bertsekas2016nonlinear}. Let $x^*\in S_k(x)$ be a point that attains the infimum in \eqref{eqn:mindist} so that $\|x^*\|_{\infty} = t$. If $r > 2t$, then $x^*$ is contained in the interior of $r\Cd$. In this case, we can construct a nonempty cube centered at $x^*$ with side lengths $r - 2t>0$ that is contained in $r\Cd$. Now, the plane $S_k(x)$ passes through $x^*$, and therefore Corollary~\ref{cor:vol} yields the result since volume is preserved under translation \citep[Theorem~2.42]{folland1999real}.
\end{proof}

Corollary~\ref{cor:volarbitrary} generalizes Corollary~\ref{cor:vol} to affine subspaces. If $S_k(x)$ contains the origin, $t = 0$ and the bound from Corollary~\ref{cor:vol} is recovered. We are now ready to present the main result of this section.

\begin{theoremrep}
\label{thm:certified_volume}
    Let $x\in\Cd$, let $t$ be defined as in \eqref{eqn:mindist} with $k=d-p$, and let $R \in [0,1/2 - t]$. If $\smoothout$ is certified at $\p(x)=U\T x$ with radius $R$, then
    \begin{equation}
        \vol_d(\Cd \cap \cert[x]) \geq \frac{\pi^{p/2}}{\Gamma(\frac{p}{2} + 1)} R^p (1 - 2R - 2t)^{d-p}.
	\label{eqn:projrsvol}
    \end{equation}
    \vspace*{-0.5cm}
\end{theoremrep}
\begin{proof}
    The characterization of $\cert$ in Proposition~\ref{prop:certsetgeo} yields
\begin{equation*}
        \cert[x] = B_p^U(R) + \SUx,
\end{equation*}
where
\begin{equation*}
	\SUx \defeq \{x\} + \Null(U\T)
\end{equation*}
is the affine subspace of $\Rd$ spanned by $\Null(U\T)$ and passing through $x$, which has dimension $d-p$. Therefore, the following is an inner-approximation of $\cert[x]$:
\begin{equation*}
        \certt[x] \defeq B_p^U(R) + \left((1 - 2R) \Cd \cap \SUx \right) \subseteq B_p^U(R) + \SUx = \cert[x].
\end{equation*}
    If we can show that $\certt[x] \subseteq \Cd$, then $\certt[x] \subseteq \Cd \cap \cert[x]$, in which case the volume of $\certt[x]$ will lower-bound the volume of $\Cd \cap \cert[x]$. To prove that this holds, let $y = y_1 + y_2 \in \certt[x]$ with $y_1\in B_p^U(R)$ and $y_2\in (1-2R)\Cd\cap\SUx$. Then
    \[
        \| y \|_{\infty} \leq \| y_1 \|_{\infty} + \| y_2 \|_{\infty} \leq R + \frac{1 - 2R}{2} = \frac{1}{2},
    \]
    by the fact that $\|y_1\|_\infty \le \|y_1\| = \|U\beta\| = \|\beta\|$ for some $\beta\in\Rp$ with $\|\beta\|\le R$ due to the semi-orthogonality of $U$, and by the fact that $y_2\in (1-2R)\Cd$. Therefore, indeed it holds that $\certt[x]\subseteq\Cd$. Thus, all that remains is to lower-bound $\vol_d(\certt[x])$. To this end, notice that $B_p^U(R)\subseteq \Range(U)$ and $(1-2R)\Cd\cap\SUx\subseteq \{x\} + \Null(U\T)$, so $B_p^U(R)$ and $(1-2R)\Cd\cap \SUx$ are contained in orthogonal affine subspaces, and therefore $\vol_d(\certt[x]) = \vol_p(B_p^U(R))\vol_{d-p}((1-2R)\Cd\cap\SUx)$. The $p$-dimensional volume of the embedded ball $\ell_2$-ball $B_p^U(R)$ is well-known (e.g., see \citet[Theorem~2.44,~Corollary~2.55]{folland1999real}) to be
    \begin{equation*}
    	\vol_p(B_p^U(R)) = \frac{\pi^{p / 2}}{\Gamma(\frac{p}{2}+1)}R^p.
    \end{equation*}
    On the other hand, since $2R<1-2t$, it holds that $1-2R>2t$. Hence Corollary~\ref{cor:volarbitrary} gives that the $(d-p)$-dimensional volume of $(1-2R)\Cd\cap \SUx$ is lower-bounded as
    \begin{equation*}
    	\vol_{d-p}((1-2R)\Cd\cap\SUx) \ge (1-2R-2t)^{d-p}.
    \end{equation*}
    Therefore,
    \begin{equation*}
    	\vol_{d}(\certt[x]) \ge \frac{\pi^{p / 2}}{\Gamma(\frac{p}{2}+1)}R^p (1-2R-2t)^{d-p},
    \end{equation*}
    which concludes the proof.
\end{proof}

Notice that the lower bound given in Theorem \ref{thm:certified_volume} does not monotonically increase with the certified radius $R$ from the randomized smoothing performed in $\Rp$. Therefore, if the certified radius $R$ is large enough, we may be able to improve our lower bound on the volume $\vol_d (\Cd \cap \cert[x])$ by using a smaller certified radius (which is of course still valid), and in particular, we may choose the optimal such radius to use according to the following closed-form expression.

\begin{propositionrep}
	Let $t$ and $R$ be as in Theorem \ref{thm:certified_volume}. The lower bound \eqref{eqn:projrsvol} is maximized as follows:
	\begin{align}
        r^* \coloneqq \min\left\{R,\frac{p(1-2t)}{2d}\right\} \in \argmax_{r\in[0,R]} \frac{\pi^{p / 2}}{\Gamma\left(\frac{p}{2}+1\right)} r^p \left(1-2r-2t\right)^{d-p}.
	\label{eq:optimal_bound}
	\end{align}
    \vspace*{-0.4cm}
\end{propositionrep}
\begin{proof}
	It suffices to maximize $h(r) \coloneqq r^p\left(1-2r-2t\right)^{d-p}$ over $r\in[0,R]$. The gradient of $h$ vanishes at points satisfying
	\begin{align*}
		\frac{dh}{dr}(r) &= pr^{p-1}\left(1-2r-2t\right)^{d-p} - 2(d-p)r^p\left(1-2r-2t\right)^{d-p-1} \\
				 &= r^{p-1}\left(1-2r-2t\right)^{d-p-1}\big(p\left(1-2r-2t\right) - 2(d-p)r\big) \\
				 &= r^{p-1}\left(1-2r-2t\right)^{d-p-1}\left(p - 2pt - 2dr\right) \\
				 &= 0.
	\end{align*}
	The set of all critical points satisfying this polynomial equation is $\left\{0,\frac{p(1-2t)}{2d}, 1 / 2 - t\right\}$. Notice that $0 < \frac{p(1-2t)}{2d} < \frac{p(1-2t)}{2p} = 1 / 2 - t$, and that $\frac{dh}{dr}(r)\ge 0$ for all $r\in\left[0,\frac{p(1-2t)}{2d}\right]$ whereas $\frac{dh}{dr}(r) \le 0$ for all $r\in \left[\frac{p(1-2t)}{2d}, 1 / 2 - t\right]$. Hence, $h$ is unimodal on $[0,1 / 2 - t]$ with the maximizer $\frac{p(1-2t)}{2d}$. Therefore, if $R < \frac{p(1-2t)}{2d}$, then $h$ is monotone increasing on the feasible interval $[0,R]$, which implies that the right endpoint $r^*=R$ is a maximizer of \eqref{eq:optimal_bound}. On the other hand, if $R \ge \frac{p(1-2t)}{2d}$, then $\frac{p(1-2t)}{2d}$ is contained in the feasible interval $[0,R]$, and thus $r^* = \frac{p(1-2t)}{2d}$ is a maximizer of \eqref{eq:optimal_bound}.
\end{proof}

The overall certification procedure derived in this section is summarized in Algorithm~\ref{alg:cert}. \hl{We note that our method inherits its ABSTAIN behavior from the original randomized smoothing Monte Carlo sampling scheme \citep{cohen2019certified}; namely, we evaluate the certification confidence using many Gaussian-perturbed samples, and if the prediction or certification procedures do not resolve with a user-specified confidence, ABSTAIN is returned.}

\subsection{Asymptotic behavior of the volume bound}
\label{sec:asymptotic}
We briefly compare the volume lower bound \eqref{eqn:projrsvol} of the projected randomized smoothing certified region to that of a standard certified $\ell_2$-ball. The volume of a $d$-dimensional $\ell_2$-ball $B_d(R) \coloneqq \{x\in\Rd : \|x\| \le R\}$ of radius $R\ge 0$ is well-known (e.g., see \citet[Theorem~2.44,~Corollary~2.55]{folland1999real}) to be
\begin{equation} \label{eqn:rsvol}
    \vol_d \left( B_d(R) \right) = \frac{\pi^{d/2}}{\Gamma(\frac{d}{2} + 1)} R^d.
\end{equation}

While the numerator of \eqref{eqn:rsvol} scales exponentially in $d$, the denominator $\Gamma(\frac{d}{2}+1)$ scales factorially, leading to tiny $\ell_2$-ball certified volumes in high-dimensional input spaces. By contrast, the denominator in our bound \eqref{eqn:projrsvol} scales factorially in the \textit{projected dimension} $p$, where $p \ll d$. This suggests dramatic improvements in the volume of our certified regions: \tmlrhl{while the numerator in \eqref{eqn:projrsvol} might be \emph{exponentially} smaller than that of \eqref{eqn:rsvol}, the denominator is smaller by a \emph{factorial} factor. We thus expect the volumes of projected randomized smoothing to dominate at higher dimensions}. We verify our analysis experimentally in Section~\ref{sec:expregion} \hl{and illustrate some simulated certified volume ratios over a range of values for $p$ and $d$ in Appendix~\ref{app:finitesweep}}.

\begin{center}
\begin{minipage}{.7\linewidth}

% Requires only algorithm and algorithmic
\begin{algorithm}[H]
   \caption{Prediction and certification}
   \label{alg:cert}
\begin{algorithmic}
    \STATE \textbf{def} $\textsc{Predict}$, $\textsc{Certify}$ as in \citet{cohen2019certified}
\vspace*{0.5\baselineskip}
\FUNCTION{$\textsc{ProjectPredict}(\net$, $U$, $\sigma$, $x$, $n$, $\alpha$)}
\STATE \textbf{def} $\p(x) = U\T x$, $\pout(\xtil) = U \xtil$

\STATE \textbf{return} $\textsc{Predict}(\net \circ \pout, \sigma, \p(x), n, \alpha)$
\ENDFUNCTION
\vspace*{0.5\baselineskip}
\FUNCTION{$\textsc{ProjectCertify}(\net$, $U$, $\sigma$, $x$, $n_0$, $n$, $\alpha$)}
\STATE \textbf{def} $\p(x) = U\T x$, $\pout(\xtil) = U \xtil$, $(d,p) \gets \mathrm{shape}(U)$

%\STATE \hspace*{-0.1cm}\begin{multline} \displaystyle
%\textrm{ABSTAIN}, \hat{c}_A, R \gets \\\textsc{Certify}(\net \circ \pout, \sigma, \p(x), n_0, n, \alpha) \nonumber
%\end{multline}
\STATE $\textrm{ABSTAIN}, \hat{c}_A, R \gets \textsc{Certify}(\net \circ \pout, \sigma, \p(x), n_0, n, \alpha)$
\STATE \textbf{if} {ABSTAIN} \textbf{then return} ABSTAIN 
\STATE \textbf{compute} orthonormal basis $v_1, \dots, v_{d-p}$ for $\Null(U\T)$
\STATE \textbf{solve} the optimization
\begin{subequations}
\begin{align} \label{eqn:mindistimpl}
    t \gets \inf_{\alpha \in \R^{d-p}} ~ \left\lVert  x + \sum _{i=1}^{d-p} \alpha_i v_i \right\rVert_{\infty} \tag{Alg1}
\end{align}
\end{subequations}
\STATE \textbf{assign} $R \gets \min \{ R, p (1 - 2t) / (2d) \}$
\STATE \textbf{compute} the certified volume lower bound
\[
    V \gets \frac{\pi^{p/2}}{\Gamma(\frac{p}{2} + 1)} R^p (1 - 2R - 2t)^{d-p}
    \]
\STATE \textbf{return} prediction $\hat{c}_A$ and volume bound $V$
\ENDFUNCTION
\end{algorithmic}
\end{algorithm}

\end{minipage}
\end{center}

\subsection{Runtime and limitations}
\label{sec:runtime}

Our certification strategy has two additional computational steps outside of the $\textsc{Predict}$ and $\textsc{Certify}$ subroutines from the conventional randomized smoothing method of \citet{cohen2019certified}. The first is a one-time computation of the principal components of the data that occurs at the beginning of training. The second is computing the $\ell_{\infty}$-regression in \eqref{eqn:mindistimpl}, which we solve as a linear program using the standard epigraph formulation. For the CIFAR-10 and SVHN datasets considered in this work, the added runtime is comparable to the certification sampling step from \citet{cohen2019certified}. Namely, we found that the $\ell_{\infty}$-regression averaged around 16 seconds for CIFAR-10 and 19 seconds for SVHN.\footnote{All experiments were run on a Ubuntu 20.04 virtual machine with 6 VCPUs, 56 GiB RAM, and a Tesla K80 GPU. Complete reproduction takes roughly $0.06$ GPU years.}

The number of variables and constraints in the optimization \eqref{eqn:mindistimpl} scales linearly with $d - p$. Since generally $p \ll d$, this makes the volume approximation of the certified region computationally intensive in high-dimensional spaces. We remark that it is still trivial to check whether any particular perturbation lies in the certified region using Proposition~\ref{prop:certset}---it is just that computing a lower bound on the volume of this region for comparison purposes becomes more challenging. For a natural image dataset such as ImageNet, the analysis of Section~\ref{sec:asymptotic} suggests that the certified region volume improvements would in fact be substantially larger than those for CIFAR-10. \tmlrhl{The main challenge to computationally verifying this conjecture lies in holding the optimization problem \eqref{eqn:mindistimpl} in memory, which is infeasible on our hardware for ImageNet-scale inputs. Further research in this vein would likely leverage techniques from the large-scale $\ell_{\infty}$-regression literature, e.g., \citet{shen2014fast}, and is outside the scope of this work}.

\begin{toappendix}
        As an aside, we note that the certified region of our method contains an $\ell_2$-ball of radius usually comparable to that of standard randomized smoothing, although in general the certified region of Propositions~\ref{prop:certset}~and~\ref{prop:certsetgeo} will be much larger as it captures the null space of the projection operator. We nevertheless include the following simple result for completeness.
	\begin{proposition}
		Let $x \in \Rd$ and $R\ge 0$. If $\smoothout$ is certified at $\p(x) = U\T x$ with radius $R$, then $\smoothhard(x + \delta) = \smoothhard(x)$ for all $\delta \in B_d(R) \coloneqq \{x\in\Rd : \|x\| \le R\}$.
	\end{proposition}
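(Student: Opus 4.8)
The plan is to reduce this statement to Proposition~\ref{prop:certset} by showing that the Euclidean ball $B_d(R)$ is contained in the certified region $\cert = \{\delta\in\Rd : \|U\T\delta\|\le R\}$. Once this inclusion is established, the conclusion $\smoothhard(x+\delta)=\smoothhard(x)$ for all $\delta\in B_d(R)$ follows immediately by invoking Proposition~\ref{prop:certset}.

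The crux is therefore to verify that $U\T$ is non-expansive in the $\ell_2$-norm, i.e., that $\|U\T\delta\|\le\|\delta\|$ for every $\delta\in\Rd$. First I would use the semi-orthogonality $U\T U = \I_p$: this makes $UU\T$ the orthogonal projection onto $\Range(U)$, so it satisfies $UU\T \preceq \I_d$ in the positive semidefinite order. Consequently
\[
    \|U\T\delta\|^2 = \delta\T U U\T \delta \le \delta\T\delta = \|\delta\|^2,
\]
which gives the claimed bound. (Equivalently, one may note that the nonzero singular values of $U\T$ all equal $1$, so its operator norm is $1$.)

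Combining the two pieces, for any $\delta\in B_d(R)$ we have $\|U\T\delta\|\le\|\delta\|\le R$, so $\delta\in\cert$ by the definition of $\cert$ in Proposition~\ref{prop:certset}; applying that proposition then yields $\smoothhard(x+\delta)=\smoothhard(x)$. I do not anticipate any genuine obstacle here, as the result is a short corollary of the geometry already characterized in Proposition~\ref{prop:certset}; the only mild subtlety is cleanly justifying the non-expansiveness of $U\T$, which is immediate from semi-orthogonality.
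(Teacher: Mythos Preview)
Your proposal is correct and follows essentially the same approach as the paper: show $B_d(R)\subseteq\cert$ by proving $\|U\T\delta\|\le\|\delta\|$ from semi-orthogonality, then invoke Proposition~\ref{prop:certset}. The only cosmetic difference is that the paper phrases the non-expansiveness via the operator norm $\|U\T\|\le 1$, whereas you unpack it through the projection identity $UU\T\preceq I_d$; these are equivalent justifications of the same inequality.
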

	
	\begin{proof}
	Let $\delta\in B_d(R)$. Then, since $U\T$ is a semi-orthogonal matrix, its $\ell_2$-induced operator norm $\|U\T\|$ is less than or equal to $1$. Thus, $\|U\T \delta\| \le \|U\T\| \|\delta\| \le R$. Therefore, $B_d(R) \subseteq \cert$.
	\end{proof}
    \subsection{Finite-dimensional volume analysis} \label{app:finitesweep}
    We complement our asymptotic certified volume comparison in Section~\ref{sec:asymptotic} with a simple finite-dimensional sweep over input dimension $d$ and projected dimension $p$, the results of which are shown in Figure~\ref{fig:finitesweep}.

    Here, we fix $R=0.5$ and $t=0.4$ as typical values for natural image datasets and sweep over a range of choices of $p$ and $d$. We assume that the low-dimensional and high-dimensional certified radii are similar, as indicated by Figure~\ref{fig:expradii}. The plotted values provide the ratio of our certified volume from Theorem~\ref{thm:certified_volume} to the volume of a standard $\ell_2$-ball, as given in Section~\ref{sec:asymptotic}, e.g., a value of $\times 30000$ indicates that the volume of our certified region is $30000$ times greater. This ratio grows rapidly as $d$ increases due to the factorial growth noted in Section~\ref{sec:asymptotic}.

    \begin{figure}[H]
        \centering
        \scalebox{0.8}{\input{figs/finite-sweep.pgf}}
        \caption{Ratio of projected randomized smoothing certified volume versus standard randomized smoothing certified volume for simulated values.}
	\label{fig:finitesweep}
    \end{figure}
\end{toappendix}

\section{Experiments}
\label{sec:experiments}

This section reports our experiments on the CIFAR-10 and SVHN datasets. We first demonstrate in Section~\ref{sec:expattack} that networks are vulnerable to $\ell_{\infty}$-bounded attacks in the subspace of low-variance principal components, to which our architecture is provably robust. Section~\ref{sec:expregion} then presents results comparing the volume of the projected randomized smoothing certified regions to a variety of baseline certified classifiers.

\subsection{Vulnerability to low-variance PCA attacks} \label{sec:expattack}
Consider perturbations $\delta \in \Null(U\T)$ contained in the span of a dataset's low-variance principal components, where we take $U$ to contain sufficient components to account for $99\%$ of the dataset variance for CIFAR-10 and $95\%$ for SVHN, which is more robust to low-variance subspace attacks due to its increased compressibility. Such a perturbation is known to be essentially orthogonal to the true data manifold, and therefore it is reasonable to expect a truly robust classifier to be invariant to small perturbations in $\Null(U\T)$. Our method is directly robust to such perturbations under the simple condition that we use fewer components in our initial projection step, as demonstrated in Proposition~\ref{prop:certsetgeo}.

We now investigate whether this theoretical guarantee adds a degree of robustness over a typical neural network classifier. The answer is affirmative. Namely, we show that our subspace attack can attain a comparable attack success rate to a standard $\ell_{\infty}$-bounded projected gradient descent ($\pgd$) attack, with roughly a four-fold increase in the size of the admissible $\ell_{\infty}$-ball.

Formally, consider a particular hard classifier $g$, to which we assume that our adversaries have white-box access, and take a specific input $x$ that $g$ classifies correctly. We first consider the standard projected gradient descent attack strategy $\pgd(x,\epsilon)$ which seeks to construct a perturbation $\|\delta\|_{\infty} \leq \epsilon$ such that $x + \delta \in \Cd$ and $g(x+\delta) \neq g(x)$. As $x + \delta \in \Cd$ if and only if $\| x + \delta \|_{\infty} \leq 1 / 2$, satisfying both $\ell_\infty$-norm constraints on $\delta$ is easily accomplished using clipping. Our routine $\spgd(x,\epsilon)$ adds the additional constraint $\delta \in \Null(U\T)$. Note that finding a perturbation that satisfies $\delta \in \Null(U\T)$, $\|\delta\|_{\infty} \leq \epsilon$, and $x + \delta \in \Cd$ is nontrivial, as projection onto one set generally removes an input from the other set. The precise details of our attack strategy are detailed in Appendix~\ref{app:subspaceattack}.

For reference, we also consider $\textsc{RandMax}$ and $\textsc{RandUniform}$, which generate perturbations randomly on the boundary of and uniformly in the attack $\ell_{\infty}$-ball, respectively. We instantiate $g$ as the Wide ResNet considered in \citet{yang2020randomized} with the default hyperparameters and $\sigma=0.15$ Gaussian noise augmentation during training. See Appendix~\ref{app:attackhyperparams} for the attack hyperparameters.

Figure~\ref{fig:subspaceattack} demonstrates that unprotected classifiers are indeed vulnerable to adversarial perturbations in the subspace of low-variance principal components. Enlargements of the attack radius do not invalidate that these are true adversarial attacks, as the perturbed images in the third row of Figure~\ref{fig:attackexamples} are still easily classified by a human.
Furthermore, $\spgd$ adversarial examples are substantially less perceptible than $\pgd$ attacks of the same magnitude, which tend to produce stronger visual distortions of the image, \hl{paralleling results from \citet{shamir2021dimpled}};
take as a representative example the area around the frog's head in the second row of the third column in Figure~\ref{fig:attackexamples}, compared with the same image perturbed by $\spgd$ in the third row. The results for the SVHN dataset in Figure~\ref{fig:svhnattackexamples} are even more striking.
%\hl{Similar examples for the SVHN dataset are contained in Appendix~\ref{app:svhnresults}}.
This is likely because $\pgd$ attacks have access to high-variance principal components which convey the dataset information content. Despite visually appearing random, we establish in Figures~\ref{fig:attacksuccess} and \ref{fig:svhnattacksuccess} that the $\spgd$ attack is significantly more successful than random-noise attacks of the same magnitude. These results suggest that undefended classifiers can be attacked in the subspace of low-variance principal components, to which projected randomized smoothing is provably robust by Proposition~\ref{prop:certsetgeo}.

\begin{figure*}[!ht] 
    \centering
    \hfill
    \begin{subfigure}[b]{0.58\linewidth}
        \adjustbox{valign=t}{
            \resizebox{0.9\linewidth}{!}{\input{figs/cifar10_attack_sweep.pgf}}
        }
	    \caption{~}
        \label{fig:attacksuccess}
    \end{subfigure}
    \hfill
    \begin{subfigure}[b]{0.4\linewidth}
        \adjustbox{valign=t}{
            \input{res/cifar10_examples.tex}
        }
	\vspace*{\baselineskip}
	    \caption{~}
        \label{fig:attackexamples}
    \end{subfigure}
    \hfill
    \\
	\hfill
        \begin{subfigure}[b]{0.58\linewidth}
            \adjustbox{valign=t}{
                \resizebox{0.9\linewidth}{!}{\input{figs/svhn_attack_sweep.pgf}}
            }
            \caption{~}
            \label{fig:svhnattacksuccess}
        \end{subfigure}
	\hfill
        \begin{subfigure}[b]{0.4\linewidth}
            \adjustbox{valign=t}{
                \input{res/svhn_examples.tex}
            }
	    \vspace*{\baselineskip}
            \caption{~}
            \label{fig:svhnattackexamples}
        \end{subfigure}
	\hfill

    \caption{
        (\subref{fig:attacksuccess}) CIFAR-10 adversarial attack success rates for the $\pgd$, $\spgd$, and random attack strategies.
        (\subref{fig:attackexamples}) Perturbation examples for CIFAR-10 with an attack radius of $\epsilon=32/255$. The top row represents the original image.
        (\subref{fig:svhnattacksuccess}) SVHN aversarial attack success rates for the $\pgd$, $\spgd$, and random attack strategies.
        (\subref{fig:svhnattackexamples}) Perturbation examples for SVHN with an attack radius of $\epsilon=32/255$.
    }
    \vspace*{-0.3cm}
    \label{fig:subspaceattack}
\end{figure*}

\subsection{Certified region comparison}
\label{sec:expregion}

Having established that the certified region of projected randomized smoothing provides a meaningful robustness improvement against low-variance principal component attacks, we now compare the volume of our certified region with several baselines. Namely, we evaluate the $\ell_2$-balls of \citet{cohen2019certified} (denoted \textsc{RS}), the $\ell_1$- and $\ell_{\infty}$-balls of \citet{yang2020randomized} (denoted $\textsc{RS4A}-\ell_1$ and $\textsc{RS4A}-\ell_\infty$, respectively), and the anisotropic ellipsoids of \citet{eiras2021ancer} (denoted \textsc{ANCER}), without use of the associated memory module.

Some additional remarks on the inclusion of \citet{eiras2021ancer} are warranted. As noted in \citet{sukenik2021intriguing}, without the inclusion of the memory module, the local certificate optimization technique in \citet{eiras2021ancer} yields overly optimistic and mathematically incorrect certificates as the smoothing distribution varies between inputs. The work \citet{eiras2021ancer} corrects this with the use of a memory module that records previous inputs to ensure compatibility of the smoothing certificates. However, this results in a classifier that is dependent on the input order and adds ambiguity about what classifier is actually being certified, as the smoothed classifier is modified at test time after each input. We therefore discard the memory module and report the certified volume at each point as if the locally optimized smoothing distribution were being used globally. This yields an upper bound on the certified volume of any data-dependent anisotropic ellipsoidal smoothing method and is thus a very strong baseline to compare against.

Our results are summarized in Figure~\ref{fig:expvol} and Table~\ref{tbl:data}. We achieve state-of-the-art median certified volumes, easily outperforming standard randomized smoothing and even the optimistic \textsc{ANCER} baseline by $706$ and $2453$ \textit{orders of magnitude} on CIFAR-10 and SVHN, respectively. The larger improvement on SVHN is attributable to the higher compressibility of the dataset. Figure~\ref{fig:expradii} in Appendix~\ref{app:cifarresults} suggests that our performance derives from the added robustness of our method against low-variance features, as the radii of the projected-space certified balls are similar to those of standard randomized smoothing in the input space. This further validates the asymptotic dimension analysis in Section~\ref{sec:asymptotic}. Note that although the \textsc{ANCER} baseline achieves higher accuracy at smaller volumes, its certificates are mathematically invalid \citep{sukenik2021intriguing}, and our method significantly outperforms \textsc{ANCER} at larger volumes.

Figure~\ref{fig:expprojectsweep} examines the CIFAR-10 certified accuracy curves over a range of choices for the dimensionality $p$ of the compressed space. For large $p$, image reconstruction is near-perfect as $p=620$ covers $99\%$ of variance in the CIFAR-10 dataset. Thus, methods with $p \geq 300$ have comparable accuracy at small regions, with the certified volumes increasing as the dimensionality of the projected space decreases, corroborating the discussion in Section~\ref{sec:asymptotic}. We are therefore able to increase the robustness of our classifier to disturbances that are normal to the manifold with only a $2\%$ drop in accuracy (Table~\ref{tbl:cifardata}). Figure~\ref{fig:expprojectsweepsvhn} presents similar results for the SVHN dataset. Note that the due to the compressibility of the dataset, fewer principal components are required to achieve high accuracy.

The hyperparameter $p$ introduces a mild tradeoff between clean accuracy and certified volume; if $p$ is chosen to be very small, the projected images may be too corrupted to classify, while if $p$ is chosen to be very large, certified volume may suffer. However, as Figures~\ref{fig:expprojectsweep} and \ref{fig:expprojectsweepsvhn} suggest, our method's certified volumes comfortably outperform those of standard randomized smoothing for a large range of $p$, indicating that this choice is not particularly sensitive. A practical heuristic for choosing $p$ involves making $p$ just large enough to reconstruct images with high fidelity---roughly corresponding to PCA components that explain $95\%$ to $99\%$ of the dataset variance. If desired, a small, localized sweep of $p$ around this initial choice can be used to further optimize the hyperparameter depending on the experimentalist's target metrics (e.g., clean accuracy, median certified volume, other metrics, or some combination). In any case, we emphasize that the parameter choice is quite robust and any additional tuning is likely to result in minimal gains as compared to the practical heuristic. We select $p=450$ for the CIFAR-10 experiment in Figure~\ref{fig:expcifar} and $p=150$ for SVHN.

\begin{figure*}[ht!] 
    \centering
    \hfil
    \begin{subfigure}[b]{0.48\linewidth}
        \resizebox{0.9\linewidth}{!}{\input{figs/cifar10_main.pgf}}
        \caption{~}
        \label{fig:expcifar}
    \end{subfigure}
    \hfil
    \begin{subfigure}[b]{0.48\linewidth}
        \resizebox{0.9\linewidth}{!}{\input{figs/cifar10_projectsweep.pgf}}
        \caption{~}
        \label{fig:expprojectsweep}
    \end{subfigure}
    \hfil
    \\
    \hfil
    \begin{subfigure}[b]{0.48\linewidth}
        \input{figs/svhn_main.pgf}
        \caption{~}
        \label{fig:expsvhn}
    \end{subfigure}
    \hfil
    \begin{subfigure}[b]{0.48\linewidth}
        \input{figs/svhn_projectsweep.pgf}
        \caption{~}
        \label{fig:expprojectsweepsvhn}
    \end{subfigure}
    \hfil

    \caption{
        (\subref{fig:expcifar}) Certified region volumes for CIFAR-10, with our method highlighted by an asterisk. Here $\alpha \approx 3465$ is a scaling constant corresponding to the $d$-dimensional unit ball volume; i.e. $\vol_d ( B_d(1) ) = 10^{-\alpha}$.
        (\subref{fig:expprojectsweep}) CIFAR-10 certified region volumes while varying the projected space dimension $p$ for our method.
        (\subref{fig:expsvhn}) Certified region volumes for SVHN. 
        (\subref{fig:expprojectsweepsvhn}) SVHN certified region volumes while varying $p$.
    }
    \label{fig:expvol}
\end{figure*}

    \begin{table}[ht]
        \vspace*{0.2cm}

        \begin{center}
            \caption{
                Quantitative representation of the data in Figure~\ref{fig:expvol}. The first column reports the smoothed classifier clean accuracy for each method and the second column reports the median certified volume for correctly classified samples. We use the median instead of the mean due to the $\log$-scaled nature of our data.
                \label{tbl:data}
            }
            \subfloat[CIFAR certification performance.]{
                \begin{tabular}{ c | c | c }
                  & Accuracy & \thead{\normalsize Median cert.\\ \normalsize vol. ($\log_{10}$)} \\ 
                 \hline
                 $\textsc{ProjectedRS}$ & $85.8\%$ & $\mathbf{-3175}$ \\  
                 RS & $\mathbf{87.8\%}$ & $-4377$ \\
                 ANCER & $87.4\%$ & $-3881$ \\
                 $\textsc{RS4A}-\ell_1$ & $83.8\%$ & $-9573$ \\
                 $\textsc{RS4A}-\ell_{\infty}$ & $85.4\%$ & $-6102$
                \end{tabular}
                \label{tbl:cifardata}
            }
            \hspace*{0.5cm}
            \subfloat[SVHN certification performance.]{
            \begin{tabular}{ c | c | c }
                  & Accuracy & \thead{\normalsize Median cert.\\ \normalsize vol. ($\log_{10}$)} \\ 
             \hline
             $\textsc{ProjectedRS}$ & $91.4\%$ & $\mathbf{-1578}$ \\  
             RS & $92.6\%$ & $-4280$ \\
             ANCER & $91.2\%$ & $-4031$ \\
             $\textsc{RS4A}-\ell_1$ & $\mathbf{93.0\%}$ & $-9573$ \\
             $\textsc{RS4A}-\ell_{\infty}$ & $92.6\%$ & $-6171$
            \end{tabular}
	        \label{tbl:svhndata}
            }
        \end{center}
    \end{table}

\begin{toappendix} 
    \tmlrhl{
        \subsection{Random projections ablation} \label{app: ablation}
    We compare the PCA projections used in our experiments with projection onto a random subspace for CIFAR-10. As reconstruction fidelity is much poorer for random subspaces, we expect to need many more components to achieve a comparable clean accuracy to PCA projections. We correspondingly adjust the number of components to $p=1500$ in Figure~\ref{fig:ablation} for random projections, and retain $p=450$ PCA components. As with the hyperparameter sweeps, we use $n=10^4$ smoothing samples. Figure~\ref{fig:ablation} shows that projecting onto the PCA basis generally provides superior certificates. We also plot standard randomized smoothing for reference.

    \begin{figure}[H]
        \centering
        \input{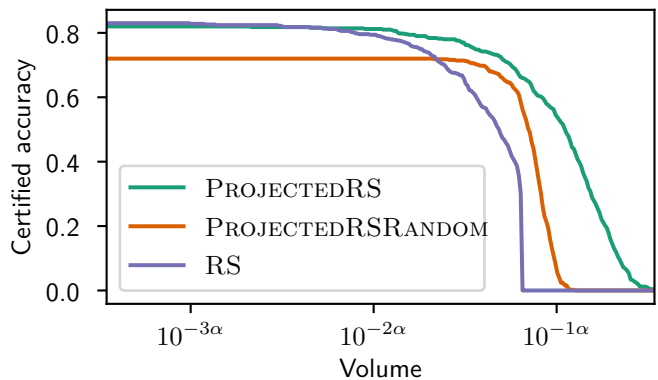}
        \caption{\tmlrhl{Ablation test comparing PCA basis projections with projection onto a random orthonormal set of vectors.}}
	    \label{fig:ablation}
    \end{figure}
    }

    \subsection{Subspace attack procedure}
    \label{app:subspaceattack}
    A typical $\pgd$ attack constructs adversarial examples by iteratively perturbing the image along the gradient of the loss and projecting onto the unit cube of feasible inputs:
    \[
        \xip = \p_{\Cd} \left( x + \p_{\Ce} \left(\alpha \sign\left( \nabla_{\delta} \mathcal{L}(\xi + \delta, y) - \xi \right)\right) \right),
    \]
    where $\xi$ is the $i$th iterate of the PGD attack, $\mathcal{L}$ is the loss function, $\sign(\cdot)$ is the element-wise sign operator, $\alpha$ is the step size hyperparameter, $\p_{\Cd}$ projects a point in $\Rd$ onto $\Cd$ by simple clipping, and $\p_{\Ce}$ is defined similarly for the zero-centered cube of sidelength $2 \epsilon$. We initialize $\xo = x$, where $(x,y)$ are the original input and label from the dataset.

    We desire a final perturbation $\delta$ such that $x + \delta \in \Cd$, $\delta \in \Ce$, and $\delta \in \Null(U\T)$. We first parameterize our perturbation in terms of the vectors $v_1, \dots, v_{d-p}$ spanning $\Null(U\T)$. Stacking these vectors columnwise to yield $V \in \R^{d \times d-p}$, we can express our pertubation as $\delta = V \delta_V$ with $\delta_V \in \R^{d-p}$. We then iterate over our parameterized perturbations $\di$, first solving for our ``target'' perturbation
    \[
        \left(\dip\right)^* = \di + \alpha \sign \left(\nabla _{\delta'} \mathcal{L} ( x + V (\di + \delta'), y) \right).
    \] 
    We then project the perturbation to satisfy the $\ell_\infty$-constraints, which takes the form of a quadratic program:
    \begin{equation*}
	    \begin{array}{ll}
	\underset{\delta_V \in \R^{d-p}}{\text{minimize}} & \Big|\Big| V \left(\dip\right)^* + V \delta_V \Big|\Big|_2^2 \\
	\text{subject to} & \| V \delta_V \|_{\infty} \leq \epsilon, \\
	& \| x + V \delta_V \|_{\infty} \leq 1/2.
	\end{array}
    \end{equation*}
    This program is always feasible with $\delta_V = 0$, and its solution satisfies our requirements for each iteration of the attack procedure.

    \subsection{CIFAR-10 additional results} \label{app:cifarresults}

    %We provide a quantitative interpretation of the data in Figure~\ref{fig:expcifar} in Table~\ref{tbl:cifardata}. The first column reports the smoothed classifier accuracy for each method, disregarding certified volume, while the second column reports the median certified volume for correctly classified samples. We use the median instead of the mean due to the $\log$-scaled nature of our data.

    %\begin{table}[ht]
        %\begin{center}
        %\caption{CIFAR certification performance.}
            %\begin{tabular}{ c | c | c }
              %& Accuracy & Median cert. vol. ($\log_{10}$) \\ 
             %\hline
             %$\textsc{ProjectedRS}$ & $85.8\%$ & $\mathbf{-3175}$ \\  
             %RS & $\mathbf{87.8\%}$ & $-4377$ \\
             %ANCER & $87.4\%$ & $-3881$ \\
             %$\textsc{RS4A}-\ell_1$ & $83.8\%$ & $-9573$ \\
             %$\textsc{RS4A}-\ell_{\infty}$ & $85.4\%$ & $-6102$
            %\end{tabular}
		%\label{tbl:cifardata}
        %\end{center}
    %\end{table}

    Here we present an additional plot comparing the radii of the \textit{low-dimensional} projected randomized smoothing balls to the radii of standard randomized smoothing balls in the high-dimensional space. These are very similar, suggesting that the increase in the volume of the certified region comes from the ``extrustion'' of the ball, which amounts to added robustness against unnecessary features that are removed in the initial projection step. For the anisotropic ANCER method, we report the geometric mean of the radii along each coordinate axis, which \citet{eiras2021ancer} defines to be the ``proxy radius.'' We only compare methods with $\ell_2$-based certified regions in this plot.
    \begin{figure}[H]
        \centering
        \input{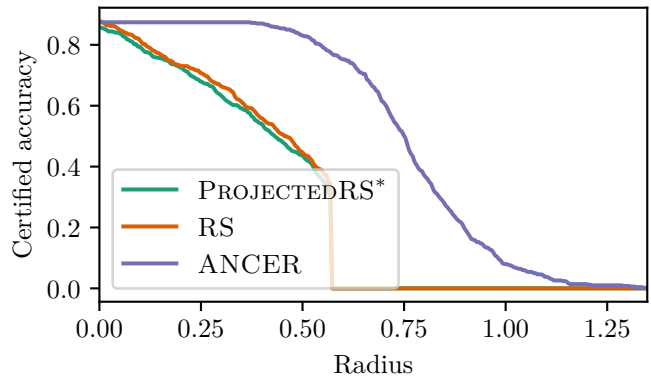}
        \caption{Certified radii on the CIFAR-10 dataset.}
	    \label{fig:expradii}
    \end{figure}

    \subsection{Empirical manifold-perpendicular robustness}
    While the focus of our work is certified robustness, for expository purposes we briefly compare the empirical performance of our model against that of standard randomized smoothing on an approximately manifold-perpendicular threat model. Specifically, following a projected gradient descent iteration, we clamp the high-variance components of the perturbation to have norm at most $R$, where $R$ is a parameter that we sweep over in the table below. An $R$ of zero corresponds to a manifold-perpendicular attack, and as $R$ increases perturbations are allowed a larger on-manifold component. We also range over the perturbation magnitude $\epsilon$. The reported numbers are CIFAR-10 empirical accuracies in percentages, with our method highlighted in bold.

\begin{table}[ht]
\begin{center}
\caption{CIFAR-10 empirical robust accuracy percentages subject to an approximately manifold-perpendicular threat model, with randomized smoothing in non-bold font and our method in bold.}
\begin{tabular}{l|l|l|l|l}
\backslashbox{$\epsilon$}{$R$} & $0.0$ & $0.25$ & $0.5$ & $1.0$ \\ \hline
$8/255$ & $76-\mathbf{79}$ & $65-\mathbf{69}$ & $54-\mathbf{58}$ & $34-\mathbf{35}$ \\ \hline
$16/255$ & $71-\mathbf{79}$ & $60-\mathbf{69}$ & $50-\mathbf{58}$ & $28-\mathbf{35}$ \\ \hline
$32/255$ & $59-\mathbf{79}$ & $50-\mathbf{69}$ & $39-\mathbf{58}$ & $20-\mathbf{35}$
\end{tabular}
\label{tbl:empirical}
\end{center}
\end{table}
We evaluate over $500$ test images and execute smoothing for $100$ samples at each input; while this is fewer than what is typically used for certification, it is a standard number of samples for the prediction problem \citep{cohen2019certified}.

As seen in Table~\ref{tbl:empirical}, our method consistently attains higher empirical robust accuracy than conventional randomized smoothing across all $\epsilon$-$R$ pairs tested. The advantages of our projection-based method are most apparent for small $R$, where the underlying geometry of the data distribution---which conventional randomized smoothing is naive to---is most influential, as well as for large $\epsilon$, where an attacker is able to create large off-manifold perturbations of the input data. As our theory suggests, we find that the empirical robust accuracy of our model remains constant over increasing $\epsilon$ for a fixed value of $R$, indicating that our model is not sensitive to increases in the off-manifold components of an attack. On the contrary, we see that, for fixed $R$, as $\epsilon$ increases, the empirical robust accuracy of conventional randomized smoothing decreases, meaning that conventional randomized smoothing is empirically sensitive to moving further away from the natural data manifold, an attack strategy that we have now shown our method to be both theoretically and empirically robust to.

    \subsection{Hyperparameter selection}
    \label{app:hyperparams}
    To maintain consistency, all networks we consider are Wide ResNets pretrained with various noise distributions using the code provided by \citet{yang2020randomized}. For networks composed with an initial projection, we finetune the network with a learning rate or $0.001$, momentum of $0.9$, and weight decay of $0.0005$ for $20$ epochs, decaying the learning rate by a multiplicative factor of $0.95$ per epoch.

    \subsubsection{Attack hyperparameters} \label{app:attackhyperparams}
    We kept the attack hyperparameters fixed across both CIFAR-10 and SVHN. For the $\pgd$ attack, we use the torchattacks library with $40$ steps and step size $\alpha = 2 / 255$ \citet{kim2020torchattacks}. We lowered this to $5$ steps with $\alpha = \e / 4$ for $\spgd$ due to the solve time of the projection step.

    \subsubsection{CIFAR-10 certification hyperparameters}
    We include the results of our hyperparameter sweeps for CIFAR-10 in Figure~\ref{fig:cifar10_sweeps}. For the $\textsc{RS4A}-\ell_1$ method, we used uniform noise and stability training to reproduce the state-of-the-art result from \citet{yang2020randomized}. The $\textsc{RS4A}-\ell_{\infty}$ sweep used Guassian noise, which we found to perform better in practice. Our sweep over the \textsc{ANCER} learning rate held the number of steps and regularization weight fixed at their defaults of $900$ and $2$, respectively. All sweeps were performed over $500$ random test samples besides ANCER which was run over $100$ samples due to the method's high computational burden.

    The results from these sweeps informed the choice of hyperparameters in Figure~\ref{fig:expcifar}. Namely, we choose $\sigma=0.25$ for our $\textsc{RS4A}-\ell_1$ baseline and $\sigma=0.15$ for our $\textsc{RS4A}-\ell_{\infty}$ baseline, as the clean accuracy drops substantially for higher variances without approaching comparable certified volume to the other methods considered. We choose a learning rate of $0.01$ for \textsc{ANCER} and $p=450$ components for projected randomized smoothing. All experiments in the hyperparameter sweeps were performed with the smoothing hyperparameters of $n_0=100$ samples to guess the smoothed class, $n=10^4$ samples to lower-bound the smoothed class probability, and a confidence of $\alpha=0.001$. For reproducing the final results in Figure~\ref{fig:expcifar} we increased $n$ to $10^6$ as is standard \citep{cohen2019certified} and used $500$ test samples to generate the plots. The attack experiment illustrated in Figure~\ref{fig:attacksuccess} was conducted over $100$ test samples.
    \begin{figure}[ht]
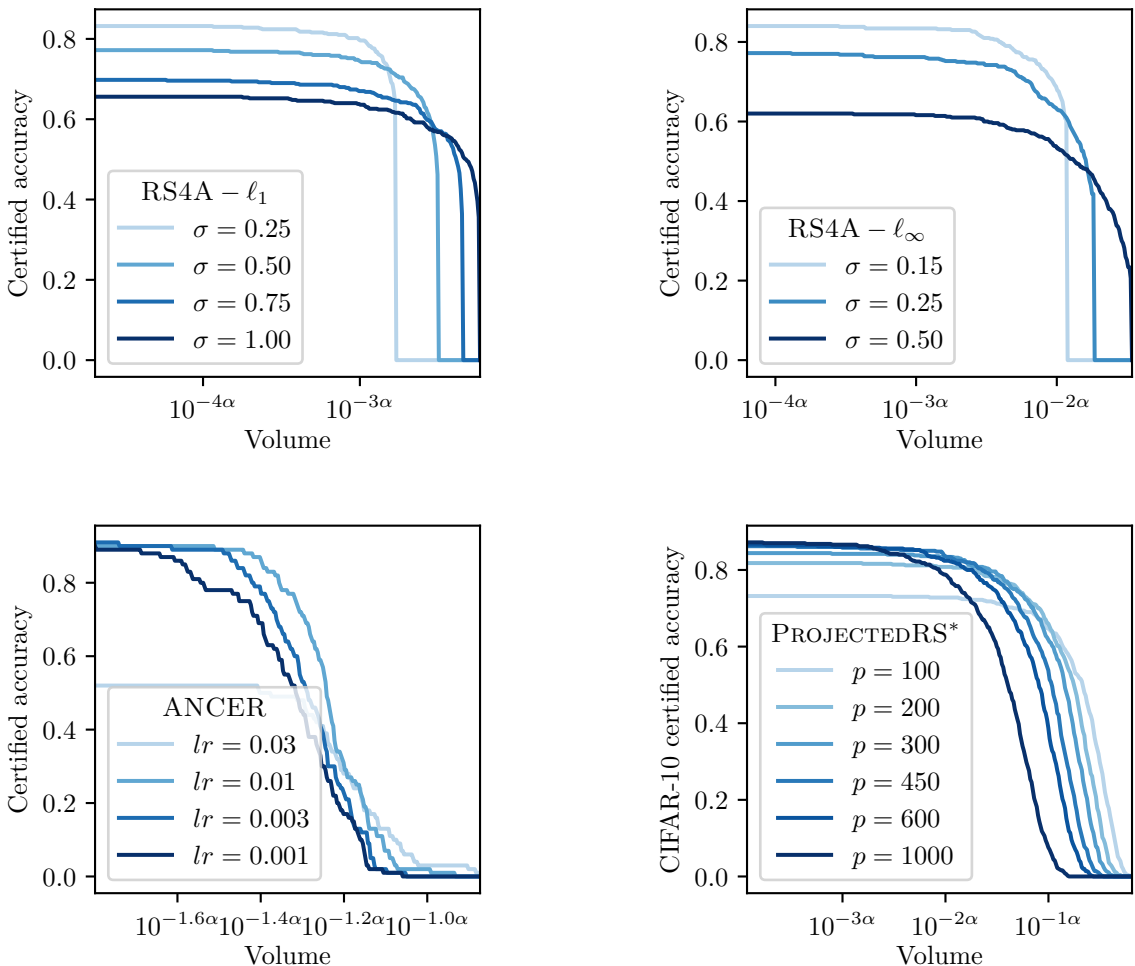

        \centering
        \begin{subfigure}[b]{0.475\textwidth}
            \centering
            \input{figs/cifar10_rs4al1sweep.pgf}
            \phantomcaption{}
            \label{fig:cifar10_rs4al1sweep}
        \end{subfigure}
        \hfill
        \begin{subfigure}[b]{0.475\textwidth}  
            \centering 
            \input{figs/cifar10_rs4alinfsweep.pgf}
            \phantomcaption{}
            \label{fig:cifar10_rs4alinfsweep}
        \end{subfigure}
        \vskip\baselineskip
        \begin{subfigure}[b]{0.475\textwidth}   
            \centering 
            \input{figs/cifar10_ancersweep.pgf}
            \phantomcaption{}
            \label{fig:cifar10_ancersweep}
        \end{subfigure}
        \hfill
        \begin{subfigure}[b]{0.475\textwidth}   
            \centering 
            \input{figs/cifar10_projectsweep.pgf}
            \phantomcaption{}
            \label{fig:cifar10_projectsweep}
        \end{subfigure}
        \caption{
            Hyperparameter sweeps for the CIFAR-10 dataset. Here $\alpha \approx 3465$ is a scaling constant corresponding to the $d$-dimensional unit ball volume; i.e. $\vol_d ( B_d(1) ) = 10^{-\alpha}$. 
        } 
        \label{fig:cifar10_sweeps}
    \end{figure} 

    \subsubsection{SVHN certification hyperparameters}
    The SVHN hyperparameter sweep, shown in Figure \ref{fig:svhn_sweeps}, is similar to that for CIFAR-10, besides the use of fewer principal components in the projected randomized smoothing sweeps due to the higher compressibility of the data. Our final plots in Figure~\ref{fig:expsvhn} use $\sigma=0.25$ for the $\ell_{1}$-baseline, $\sigma=0.15$ for the $\ell_{\infty}$-baseline, an ANCER learning rate of $0.01$, and $p=150$ for projected randomized smoothing.

    \begin{figure}[ht]
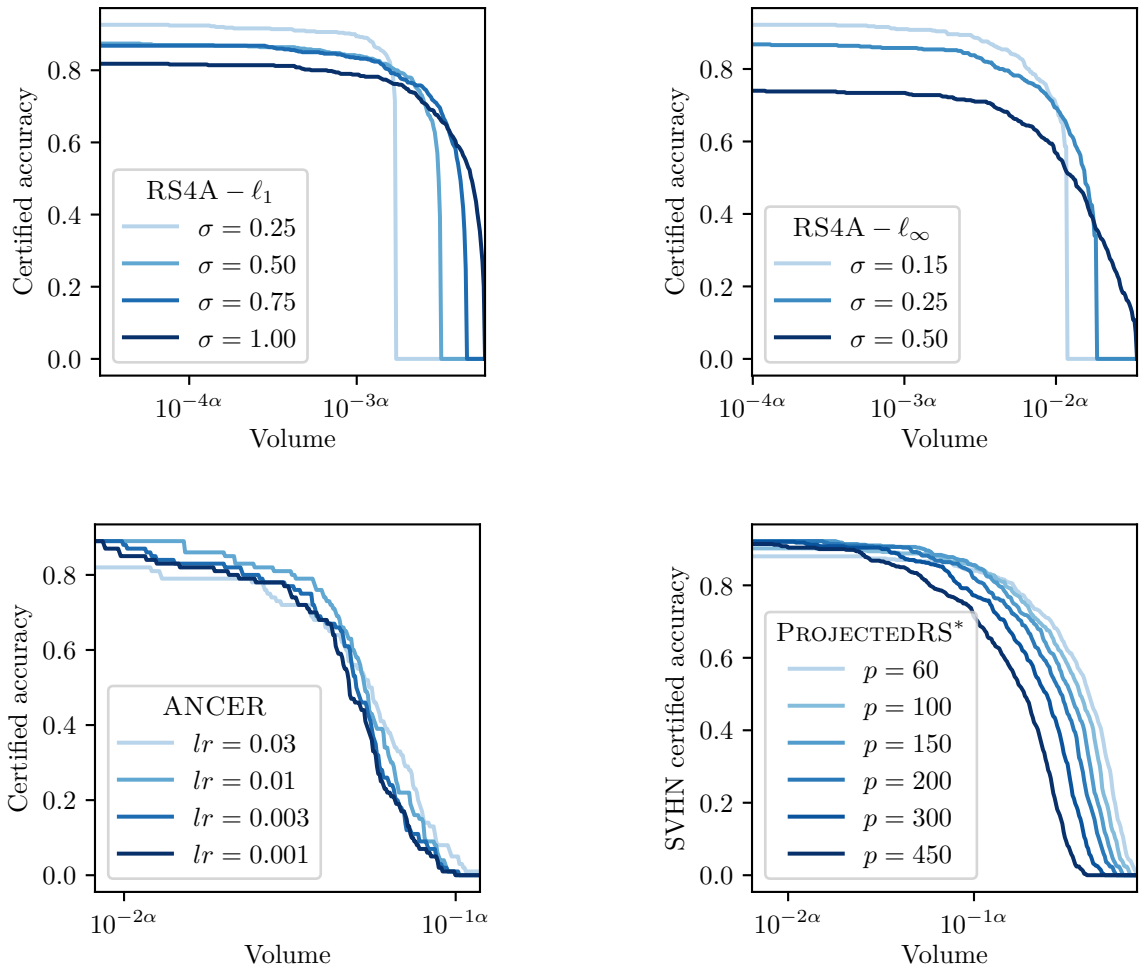

        \centering
        \begin{subfigure}[b]{0.475\textwidth}
            \centering
            \input{figs/svhn_rs4al1sweep.pgf}
            \phantomcaption{}
            \label{fig:svhn_rs4al1sweep}
        \end{subfigure}
        \hfill
        \begin{subfigure}[b]{0.475\textwidth}  
            \centering 
            \input{figs/svhn_rs4alinfsweep.pgf}
            \phantomcaption{}
            \label{fig:svhn_rs4alinfsweep}
        \end{subfigure}
        \vskip\baselineskip
        \begin{subfigure}[b]{0.475\textwidth}   
            \centering 
            \input{figs/svhn_ancersweep.pgf}
            \phantomcaption{}
            \label{fig:svhn_ancersweep}
        \end{subfigure}
        \hfill
        \begin{subfigure}[b]{0.475\textwidth}   
            \centering 
            \input{figs/svhn_projectsweep.pgf}
            \phantomcaption{}
            \label{fig:svhn_projectsweep}
        \end{subfigure}
        \caption{
            Hyperparameter sweeps for the SVHN dataset. Here $\alpha \approx 3465$ is a scaling constant corresponding to the $d$-dimensional unit ball volume; i.e. $\vol_d ( B_d(1) ) = 10^{-\alpha}$. 
        } 
        \label{fig:svhn_sweeps}
    \end{figure} 

    \subsection{Licenses} \label{app:licenses}
    The CIFAR-10 dataset is covered by the MIT license, and the SVHN dataset is covered by the GPL 3 license.
\end{toappendix}

\section{Conclusion}

Motivated by the manifold hypothesis, we consider a classifier architecture that first projects onto a principal component approximation of the data manifold and then applies randomized smoothing in the low-dimensional projected space. This yields a precise characterization of the input-space certified region as capturing disturbances in the projection nullspace. We interpret this as a certifiable robustification against vulnerable features that are irrelevant to the dataset information content as they are normal to the data manifold. We show that unprotected classifiers, unlike our method, are vulnerable to such perturbations by explicitly constructing adversarial examples in the span of the low-variance principal components. We prove a volumetric lower bound on the intersection of our certified region with the unit cube of feasible inputs and derive two additional ways to tighten the bound: \hl{one which involves solving an $\ell_{\infty}$-regression problem and another which is a closed-form radius adjustment.}

Comparing against state-of-the-art $\ell_1$-, $\ell_2$-, $\ell_{\infty}$-, and anisotropic baselines shows that our classifier produces certified regions with many orders of magnitude greater volume. This confirms an asymptotic analysis that shows that our method's certified volumes decay factorially in the low dimension of the \textit{projected space}, while competing methods decay factorially in the high dimension of the \textit{input space}.
\hl{Future research directions include examining more sophisticated dimensionality reduction techniques while maintaining certified guarantees for projected points in the original input space.}

\clearpage

%\begin{toappendix}
    %\subsection{Societal impact} \label{app:impact}
    %Improving neural network robustness is critical for ensuring that machine learning models are safe to deploy in real-world applications such as autonomous driving and medical diagnostics. However, potential side effects of improving robustness are not well understood, with some research suggesting that robust networks may be more biased \citep{chang2020adversarial}. While our work focuses on certifiable robustness, similar concerns may apply and are an important topic of future research.
%\end{toappendix}

\bibliography{main}
\bibliographystyle{tmlr}

\end{document}